\title{\LARGE \bf
 An Input-to-State Stability Perspective on Robust Locomotion
}
\author{Maegan Tucker and Aaron D. Ames
\thanks{This work was supported by Wandercraft and the Zeitlin Family Fund.}
\thanks{Maegan Tucker is with the Department of Mechanical and Civil Engineering, California              Institute of Technology, Pasadena, CA 91125.
        {\tt\small mtucker@caltech.edu}}%
\thanks{Aaron Ames is with both the Department 
        of Mechanical and Civil Engineering and the Department of Computing and Mathematical Sciences, California Institute of Technology, Pasadena, CA 91125.
        {\tt\small ames@caltech.edu}}%
}
\begin{document}

\maketitle
\thispagestyle{empty}
\pagestyle{empty}

\begin{abstract}
Uneven terrain necessarily transforms periodic walking into a non-periodic motion. As such, traditional stability analysis tools no longer adequately capture the ability of a bipedal robot to locomote in the presence of such disturbances.
This motivates the need for analytical tools aimed at generalized notions of stability -- robustness. 
Towards this, we propose a novel definition of robustness, termed \emph{$\delta$-robustness}, to characterize the domain on which a nominal periodic orbit remains stable despite uncertain terrain. This definition is derived by treating perturbations in ground height as disturbances in the context of the input-to-state-stability (ISS) of the extended Poincar\'{e} map associated with a periodic orbit. The main theoretic result is the formulation of robust Lyapunov functions that certify $\delta$-robustness of periodic orbits.  This yields an optimization framework for verifying $\delta$-robustness, which is demonstrated in simulation with a bipedal robot walking on uneven terrain.


\end{abstract}

\section{Introduction}

Achieving stable bipedal locomotion is a challenging control task---especially when locomoting on rough terrain. 
One approach with demonstrated success is to generate nominal walking behaviors, encoded by periodic orbits, and then use either feedback controllers or online planning to drive the system to these nominal behaviors \cite{grizzle2014models, griffin2017nonholonomic}. A benefit of this approach is that the stability of the nominal gait can then be analyzed using the method of Poincar\'e sections for systems with impulse effects \cite{morris2005restricted, morris2009hybrid}, i.e., one need only check the eigenvalues of the Poincar\'e map. 
Yet this notion of stability is inherently local and does not \new{provide provable guarantees of stability in the presence of disturbances such as those experienced with varying terrain height.}  




There have been approaches that have aimed to analyze the robustness of bipedal walking.  
Examples include the gait sensitivity norm \cite{hobbelen2007disturbance}, and the transverse linearization \cite{manchester2011stable}.  Yet these tools do not provide theoretical certificates of robustness. Similarly, existing work has synthesized bipedal walking gaits that are maximally robust to known environmental disturbances \cite{dai2012optimizing,park2012finite,hamed2016exponentially,tucker2022robust}.  While these have worked well in practice, again there is a lack of theoretic tools to formally asses their robustness, i.e., characterizing the domain on which behaviors are stable.  As a step in this direction, input-to-state stability (ISS)  \cite{sontag2008input} has been effectively leveraged in the context of robotic walking and running for uncertain dynamics \cite{kolathaya2018input, ma2017bipedal}. \new{However, this previous work limits the class of disturbances, $d$, that can be handled to those captured in a control-affine form (i.e., $\dot{x} = f(x) + g(x)u(x) + g(x)d$).}

\new{In contrast, our work formulates a notion of robust walking that quantifies the gap between stability and robustness mathematically by explicitly considering disturbances to the guard condition (commonly selected to be the ground height).}
\new{By considering this non-affine class of disturbances, our work is able to  define what it means for a periodic orbit to be certifiably robust to uncertain terrain as illustrated in Fig. \ref{fig: uncertainguard}. 
Specifically, we define the $\delta$-robustness of periodic orbits as the maximum disturbance in the guard condition that can be accommodated while remaining stable to a neighborhood.}
The main result of our paper is the formulation of robust Lyapunov functions that certify the robustness of periodic orbits to disturbances in the environment. 
The leads to an algorithm for certifying the $\delta$-robustness of walking gaits, as demonstrated in simulation with a seven-link bipedal robot walking on uneven terrain.



\begin{figure}[tb]
    \centering
    \includegraphics[width=0.99\linewidth]{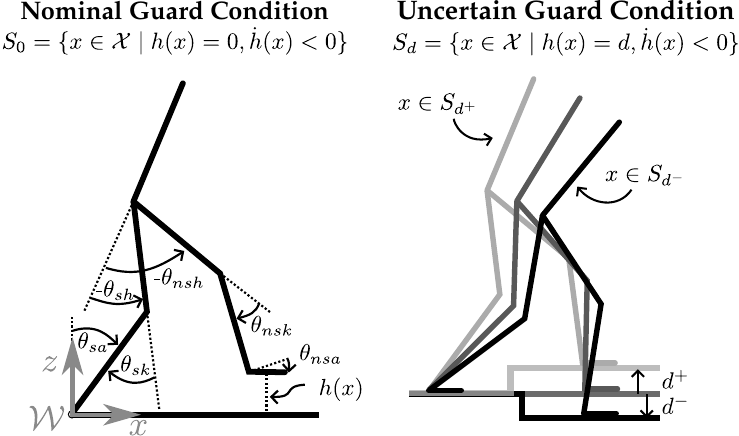}
    \vspace{-7mm}
    \caption{A depiction of (left) the configuration coordinates for a seven-link walker and (right) the uncertain guard condition. }
    \label{fig: uncertainguard}
    \vspace{-4mm}
\end{figure}

\section{Preliminaries}
\label{sec: preliminaries}
Walking naturally lends itself to be modeled as a hybrid system because of the presence of both continuous dynamics (during the swing phase) and discrete dynamics (at swing foot impacts) \cite{westervelt2018feedback}. Additionally, the dynamics of walking can be separated into those that can be controlled using actuation, and those that are uncontrollable -- termed the zero dynamics. 

\newsec{Hybrid Systems}
Consider a hybrid control system with states $x \in \X \subset \R^n$ and a control input $u \in \U \subset \R^m$. Given a continuously differentiable function\new{\footnote{\new{Note that $h$ must be selected such that it does not lie within the null space of the actuation matrix, i.e., $L_gh(x)\not=0$}}} $h:\mathcal{X}\to\R$, 
let $D \subset \X$ denote the admissible domain on which the continuous-time dynamics evolve and $S \subset D$ denote the \textit{guard} (also commonly called the \textit{switching surface}), defined as:
\begin{align}
    D &= \{x\in\mathcal{X} \mid h(x) \ge 0\}, \\
    S &= \{x\in \mathcal{X} \mid h(x) = 0,~\dot h(x) < 0\}.
    \label{eq: zeroguard}
\end{align}
For states $x^-\in S$, a discrete impact map $\Delta:S \to D$, termed the \textit{reset map} is applied. Thus, the complete hybrid system can be modeled as:
\begin{numcases}{\mathcal{H}\mathcal{C}  = }
\dot{x} = f(x) + g(x) u & $x \in D \setminus S$, \label{eq: continuouscontrol}
\\
x^+ = \Delta(x^-) & $x^- \in S$, \label{eq: discretecontrol}
\end{numcases}
where \eqref{eq: continuouscontrol} and \eqref{eq: discretecontrol} denote the continuous-time and discrete-time dynamics respectively.  It is assumed (as is typical) that all quantities in $\mathcal{H}\mathcal{C}$ are locally Lipshitz continuous, e.g., the impact map $\Delta$ is locally Lipschitz.
This follows from the assumption of perfectly plastic impacts \cite{glocker1992dynamical}. 
    Importantly, note that for impact maps based on rigid-body contacts \cite{hurmuzlu1994rigid}, the impact map does not depend on the ground height. 



Given a locally Lipschitz feedback controller $u = k(x)$, the result of applying this to the hybrid control system results in a hybrid system: 
\begin{numcases}{\mathcal{H}  = }
\dot{x} = f_{\rm cl}(x) := f(x) + g(x) k(x) & $x \in D \setminus S$, \label{eq: continuous}
\\
x^+ = \Delta(x^-) & $x^- \in S$, \label{eq: discrete}
\end{numcases}
The local Lipschitz continuity of the continuous dynamics \eqref{eq: continuous} implies that solutions exist and are unique locally.  
We will use the flow notation for these solutions, $\varphi_t(x_0)$, which is the solution to the continuous dynamics at time $t\in \R_{\geq 0}$ with initial condition $x_0 \in D$. 
Under the assumption of non-Zenoness, the flow of the hybrid system is given by: 
$$
\varphi_t(x_0) = \varphi_{t-\tau_k}(x_k^+) , \qquad t \in [\tau_k,\tau_{k+1})
$$
where $\tau_k$ are the ``impact'' times and $x_k^+$ the post-impact states, determined by the consistency conditions:
\begin{eqnarray}
\label{eqn:consistency}
x_k^+ = \Delta(x^-_k), \qquad x_k^- = \varphi_{\tau_k-\tau_{k-1}}(x^+_{k-1}) \in S,
\end{eqnarray}
for $k \geq 1$, with $\tau_0 = 0$ and $x_0 \in D$ the initial condition.  When $x_0 \in S$ one trivially takes $x^-_1 = x_0$ and $\tau_1 = \tau_0$.

\newsec{Periodicity of Hybrid Systems}
The flow $\varphi_t(x_0)$ of \eqref{eq: continuous} is periodic with period $T \in \R_{\geq 0}$ if there exists a point $x^* \in S$ satisfying $\varphi_T(\Delta(x^*)) = x^*$. The periodic orbit associated with this periodic flow is denoted:
\begin{align}
    \O := \{\varphi_t(\Delta(x^*)) \in D \mid 0 \leq t \leq T_I(x^*) = T\},
\end{align}
with $T_I: \widetilde{S} \to \R$ being the time-to-impact function:
\begin{align}
\label{eqn:timetoimpact}
    T_I(x) = \inf\{t \geq 0 \mid \varphi_t(\Delta(x)) \in S\}.
\end{align}
As proven in Lemma 3 of \cite{grizzle2001asymptotically}, the time-to-impact function is continuous at points $x \in \widetilde{S}$ satisfying the conditions $\widetilde{S} := \{ x \in S \mid 0 < T_I(x) < \infty\}$. Thus, $T_I$ is well-defined for $\widetilde{S}$.  The periodic orbit, $\O$, is exponentially stable if it is exponentially stable as a set: for $x_0 \in D$:
$$
\| \varphi_t(x_0) \|_{\O} \leq 
M e^{-\alpha t} \| x_0 \|_{\O}
$$
where $\| x \|_{\O}= \inf_{y \in \O} \| x - y \|$ is the set distance. 


The exponential stability of this periodic orbit $\O$ can be analyzed via the Poincar\'e map.  In particular, $S$ is a Poincar\'e section (and well-defined as such due to the assumption that $\dot{h}(x) < 0$), and associated with this Poincar\'e section is the Poincar\'e map $P: \widetilde{S} \to S$ defined as:
\begin{align}
    P(x^-) := \varphi_{T_I(x^-)}\left(\Delta(x^-)\right).
    \label{eq: poincare}
\end{align}
The Poincar\'e map describes the evolution of the hybrid system as a discrete-time system:
\begin{align}
    x^-_{k+1} = P(x^-_k), ~k=0,1,\dots,
    \label{eq: discrete-system}
\end{align}
wherein $x_k^-$ is just given as in \eqref{eqn:consistency}. 
In \cite{morris2005restricted} (see also \cite{nersesov2002generalization}, Theorem 2.1), it was proven that a periodic orbit $\O$ is exponentially stable if and only if $x^* \in \O \cap S$ is an exponentially stable fixed point of the discrete-time system \eqref{eq: discrete-system}. This is summarized in the following:


\begin{theorem}[\cite{morris2005restricted}]
\label{thm:conttoPexpstability}
A periodic orbit $\O$ is exponentially stable if and only if for the corresponding fixed point $P(x^*) = x^* \in S$, there exist $M > 0$, $\alpha \in (0,1)$, and some $\delta > 0$ such that:
\begin{align}
  \forall ~ x \in B_{\delta}(x^*) \cap \widetilde{S} &\quad \implies \quad  \nonumber\\
 & \| P^i(x) - P(x^*)\| \leq M\alpha^i \|x - x^*\|, \nonumber
\end{align}
with $P^i(x)$ denoting the Poincar\'e map applied $i \in \N_{\geq 0}  = \{0,1,\dots,n,\dots\}$ times.
\label{def: expstability}
\end{theorem}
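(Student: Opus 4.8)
The plan is to prove the two implications separately, in each case relating the continuous-time set-distance $\|\varphi_t(x_0)\|_{\mathcal{O}}$ to the discrete distance $\|x_k^- - x^*\|$ measured on the Poincar\'e section. Two structural facts make this bridge possible: first, by non-Zenoness together with $\dot h(x) < 0$ on $S$, the impact times $\tau_k$ admit a uniform positive lower bound (a dwell time) and, near the orbit, an upper bound close to $T$; second, the flow $\varphi_t$, the reset map $\Delta$, and the time-to-impact function $T_I$ are all locally Lipschitz, so trajectories starting close together remain close over one period by a Gronwall estimate. Continuity of $T_I$ on $\widetilde{S}$ (Lemma 3 of \cite{grizzle2001asymptotically}) is what guarantees these bounds hold uniformly on a neighborhood.

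For the sufficiency direction ($\Leftarrow$), suppose $x^*$ is an exponentially stable fixed point of $P$, so that $\|P^i(x) - x^*\| \le M\alpha^i\|x-x^*\|$ for $x$ near $x^*$ on $\widetilde{S}$. First I would verify that the iterates $x_k^- = P^k(x_0^-)$ decay exponentially and hence remain inside the neighborhood where the estimate is valid. Then, for $t \in [\tau_k,\tau_{k+1})$, I would write $\varphi_t(x_0) = \varphi_{t-\tau_k}(\Delta(x_k^-))$ and bound its distance to $\mathcal{O}$ by comparing it with the nominal flow $\varphi_{t-\tau_k}(\Delta(x^*))$, which lies on $\mathcal{O}$. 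The Lipschitz dependence of the flow on its initial condition over the bounded interval $[0,T_I(x_k^-)]$, together with Lipschitz continuity of $\Delta$, controls this distance by $\|x_k^- - x^*\|$, which decays like $\alpha^k$. Finally I would convert the per-step decay $\alpha^k$ into a continuous exponential rate $M'e^{-\alpha' t}$ by using the dwell-time lower bound to relate the step count $k$ to the elapsed time $t$.

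For the necessity direction ($\Rightarrow$), suppose $\mathcal{O}$ is exponentially stable as a set. Given $x \in \widetilde{S}$ near $x^*$, the iterate $P^i(x)$ is the state at the $i$-th impact, i.e. $\varphi_{t_i}(x)$ with $t_i \approx iT$, and it lies on $S$. I would bound $\|P^i(x)\|_{\mathcal{O}} \le M e^{-\alpha t_i}\|x\|_{\mathcal{O}}$ and then transfer this back to the section. The crucial ingredient is transversality: since $\dot h < 0$ on $S$, the guard meets the orbit only at $x^*$ locally and does so transversally, so for $y \in S$ near $x^*$ the set-distance $\|y\|_{\mathcal{O}}$ and the in-section distance $\|y-x^*\|$ are equivalent up to constants. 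Combining these yields $\|P^i(x)-x^*\| \le M'' (\alpha'')^i \|x-x^*\|$ with $\alpha'' \in (0,1)$.

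The main obstacle I anticipate is the careful bookkeeping needed to establish the metric equivalence between the set-distance $\|\cdot\|_{\mathcal{O}}$ and the section-distance $\|\cdot - x^*\|$, while simultaneously propagating the Lipschitz and Gronwall bounds uniformly over the neighborhood and controlling the drift of the impact times $\tau_k$ away from their nominal values. This is precisely where continuity of $T_I$ on $\widetilde{S}$ and the transversality condition $\dot h < 0$ do the heavy lifting, and getting all the constants to close consistently between the discrete and continuous rates is the delicate part of the argument.
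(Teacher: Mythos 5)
The paper does not actually prove this theorem---it is imported from \cite{morris2005restricted} (see also Theorem 2.1 of \cite{nersesov2002generalization}) and stated without proof, so there is no in-paper argument to compare against. Your outline reproduces the standard argument from those references: Lipschitz/Gronwall control of the flow over one return map application plus a dwell-time lower bound (from continuity of $T_I$ and $T_I(x^*)=T>0$) to convert per-step decay into a continuous exponential rate for sufficiency, and transversality of $S$ to $\mathcal{O}$ at $x^*$ to make the set distance $\|\cdot\|_{\mathcal{O}}$ and the in-section distance $\|\cdot-x^*\|$ equivalent for necessity; this is the right skeleton. The one step your sketch leaves implicit in the necessity direction is that $P^i(x)$ is defined for all $i$ in the first place---exponential stability of $\mathcal{O}$ as a set confines the trajectory to a tubular neighborhood but does not by itself say the trajectory keeps returning to $\widetilde{S}$ near $x^*$; that requires a flow-box argument using the same transversality condition $\dot h < 0$, and it must be established before the decay estimate can be iterated.
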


\section{An ISS Perspective on Walking: $\delta$-Robustness}
\label{sec: uncertainguard}

This section provides the key formulation of robustness considered throughout this paper---that of $\delta$-robustness.  The core concept behind this definition is stability in and of itself is not a sufficiently rich concept to capture robustness, since it is purely local.  Thus, we define a notion of robustness leveraging the extended Poincar\'e map \new{(which extends the Poincar\'e map to consider general guard conditions)} and input-to-state stability, wherein the inputs are the disturbances associated with uncertain guard conditions.

\newsec{Motivation}
Practically, the stability of periodic orbits can be analyzed by evaluating the eigenvalues of the Poincar\'e return map linearized around the fixed point. Specifically, if the magnitude of the eigenvalues of $DP(x^*) = \frac{\partial P}{\partial x}(x^*)$ is less than one (i.e. $\max |\lambda (DP(x^*))| < 1$), then the fixed point is stable \cite{morris2005restricted,perko2013differential}. 
%
\new{While this property implies that the Poincar\'e map is robust to sufficiently small perturbations, it is often incorrectly assumed that the magnitude of the eigenvalues say something deeper about the broader robustness of the periodic orbit to perturbations.}
This is not the case, as the following example illustrates.


\begin{example}
Consider a seven-link bipedal robot as shown in Figure \ref{fig: uncertainguard}.  To illustrate how the eigenvalues associated with the linearization fail to tell the whole story, we will consider the robustness of two gaits to differing ground height conditions.  
As illustrated in Figure \ref{fig: motivation}, the classic Poincar\'e analysis does not accurately reflect the robustness of periodic orbits to local disturbances in the guard condition. That is, the gait with the smaller maximum eigenvalue (magnitude) is more fragile to changing ground heights. 
\end{example}

\begin{figure}[tb]
    \centering
    \includegraphics[width=\linewidth]{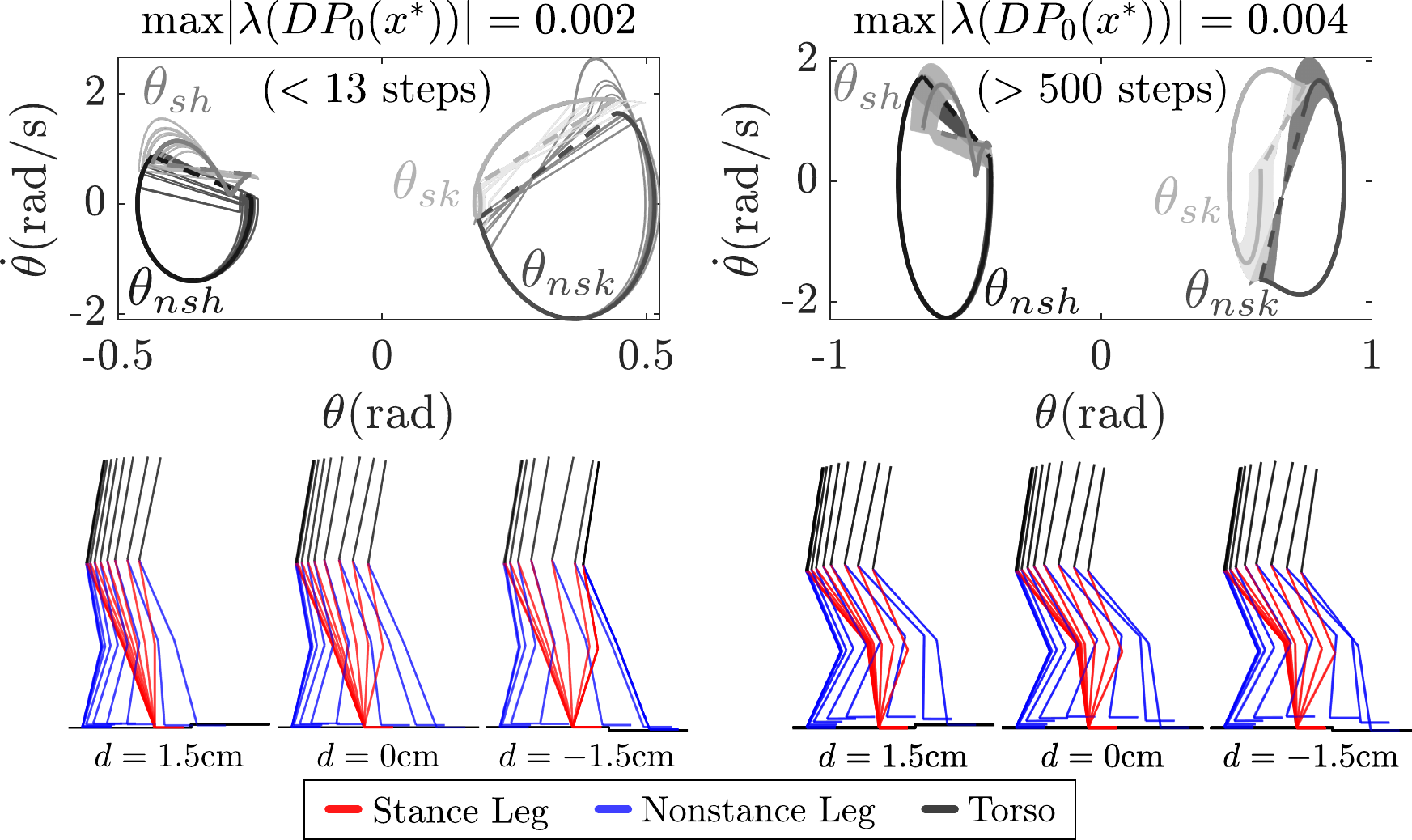}
    \vspace{-0.6cm}
    \caption{The phase portraits at the top of the figure illustrate the walking for uncertain guard conditions $S_{d_k}$ with $d_k \sim U(-\delta,\delta)$ (in this example, $\delta = 1.5$cm) for $k = 500$ steps. Visualizations of the walking gaits for three step conditions are provided at the bottom. The results demonstrate that a periodic orbit with $\max |\lambda(D P_0(x^*))| < 1$ (on the left) is not robust to variations in the guard condition (the orbit diverged after only 13 steps), while a periodic orbit with a larger $|\lambda|$ (on the right) is comparatively more robust. This motivates the need for an ISS perspective.}
    \label{fig: motivation}
    \vspace{-0.4cm}
\end{figure}

\newsec{Uncertain Guard Conditions}
\new{To formulate a notion of robustness, uncertain guard conditions are considered---this, for example, captures uncertain ground height for walking robots. Specifically, as done in \cite{hamed2016exponentially}, the Poincar\'e map can be extended to explicitly consider changes to the guard condition (i.e., $h(x) = d$). First, define a general guard as:}
\begin{align}
    S_d &= \{x\in \mathcal{X} \mid h(x) = d, ~\dot h(x) < 0\},
    \label{eq: heightguard}
\end{align}
with $d \in \mathbb{D}$ and $\mathbb{D} := [d^-,d^+] \subset \R$ for some $d^- < 0 < d^+$. Using this general guard definition, the previous guard \eqref{eq: zeroguard} is now denoted as $S_0$.  Under the assumption that $S_d \subset D$ for all $d \in \mathbb{D}$, we have a corresponding hybrid system: 
\begin{numcases}{\mathcal{H}_d  = }
\dot{x} = f_{\rm cl}(x) := f(x) + g(x) k(x) & $x \in D \setminus S_d$, \label{eq: continuousd}
\\
x^+ = \Delta(x^-) & $x^- \in S_d$, \label{eq: discreted}
\end{numcases}


\new{Next, we must modify the time-to-impact function to be defined on a neighborhood of the fixed point $x^*$. In particular, the time-to-impact function exists as a result of the implicit function theorem \cite{lee2010introduction} applied to the implicit function (of time) $ h(\varphi_t(\Delta(x)))$ which therefore satisfies: 
$h(\varphi_T(\Delta(x^*))) = 0$, and $\dot{h}(\varphi_T(\Delta(x^*))) < 0$,
for $x^* \in \O \cap S$. Thus, there exists an explicit function $T_e: B_{\rho}(x^*) \subset D \to \R$, for some $\rho > 0$\footnote{We assume throughout the paper that for all $\rho > 0$ of interest, the domain $D$ of the continuous dynamics is appropriately chosen so that $B_{\rho}(x^*) \subset D$.}, termed the \emph{extended time-to-impact function} satisfying:
\begin{eqnarray}
\label{eqn:extendedtimetoimpact}
h(\varphi_{T_e(x)}(\Delta(x))) = 0, \qquad \forall ~ x \in B_{\rho}(x^*).
\end{eqnarray}
It follows that $T_I$ in \eqref{eqn:timetoimpact} is just $T_I = T_e|_{S}$, wherein the Poincar\'e map is given by considering only $x \in B_{\rho}(x^*) \cap S$.}  
This function can be further extended (as a partial function) to account for varying guards: $T_e : B_{\rho}(x^*) \times \mathbb{D} \partialto \R$:
\begin{align}
    T_e(x_0,d) := \inf\{ t \geq 0 \mid \varphi_t(\Delta(x_0)) \in S_d \}.
\end{align}
Importantly, this is a partial function because (by the implicit function theorem) it is only well-defined for $d = 0$ and by continuity sufficiently small $d^-$ and $d^+$.  
Using this extended time-to-impact function, we can define the \textit{extended Poincar\'e map} as a partial function: $P_d: B_{\rho}(x^*) \partialto S_d$:
\begin{align}
\label{eqn:Pd}
    P_{d}(x^-) := \varphi_{T_e(x^-,d)}(\Delta(x^-)).
\end{align}
This allows us to frame walking with uncertain guards as a discrete-time control system.







\newsec{Connections with Input-to-State Stability}
It is important to note that we can view \eqref{eqn:Pd} as a dynamical system evolving with an ``input'' given by the guard height: $d = h(x)$. 
In particular, this leads to the discrete-time dynamical system: 
\begin{align}
    \label{eqn:discretetimeP}
    x_{k+1} = \P(x_k,d_k) := P_{d_k}(x_k), 
\end{align}
for some sequence of $d_k \in [d^-,d^+] \subset \R$, $k \in \N_{\geq 0}$, determining the guard height specific to step $k \in \N_{\geq 0}$ such that $x_{k+1} \in S_{d_k}$.  The result is a partial function:
$$
\P : B_{\rho}(x^*) \times  [d^-,d^+] \partialto S_{[d^-,d^+]} : = \bigcup_{d \in [d^-,d^+]} S_d,
$$
wherein we assume that $B_{\rho}(x^*) \subset S_{[d^-,d^+]}$ (or a smaller $\rho$ is chosen so that this holds). The partial function nature of $\P$ implies that solutions may not exist for all time, i.e., the solution $x_k$ might leave the ball $B_{\rho}(x^*)$ on which $\P$ is well-defined.


Given the discrete-time system \eqref{eqn:discretetimeP}, and the fact that we view the input $d$ as a disturbance, there are obvious connections with input-to-state stability \cite{jiang2001input}.  In our setting, the discrete-time system $x_{k+1} = \P(x_k,d_k)$ (with $d_k$ viewed as an input) is \emph{input-to-state stable (ISS)} if: 
\begin{eqnarray}
\label{eqn:expiss}
\| x_k - x^* \| \leq   \beta(\| x_0 - x^*\|,k)  + \gamma( \| d \|_{\infty}  )
\end{eqnarray}
for $k \in \N_{\geq 0}$, $\beta$ a class $\KL$ function, and $\gamma$ a class $\K$ function.  Note that here $\| d \|_{\infty}  = \max\{-d^-,d^+\}  $ since $d : \N_{\geq 0} \to [d^-,d^+]$ is scalar valued and takes values in an interval.  Also note that, in the context of locomotion, we are especially interested in exponential stability.  To certify exponential ISS, the class $\KL$ function becomes: $\beta(r,k) = M\alpha^k r$ for $M > 0$ and $\alpha \in (0,1)$.  The end result is the exponential ISS (E-ISS) condition: 
\begin{eqnarray}
\label{eqn:expiss}
\| x_k - x^* \| \leq M\alpha^k \| x_0 - x^*\|  + \gamma( \max\{-d^-,d^+\} )
\end{eqnarray}
This allows us to formulate a notion of robustness. 

\newsec{$\bm{\delta}$-Robustness}  We now have the necessary components to present the key concept of this paper: $\delta$-robustness.  The goal in formulating this notion of robustness is to find a single scalar constant, $\delta \geq 0 $, that characterizes the robustness of a periodic orbit $\O$ in the context of uncertain guard height.  In this context, we wish to leverage \eqref{eqn:expiss}---yet the class $\K$ function $\gamma$ gives a degree of freedom that is undesirable in designing a metric for robustness.  This observation leads to:

\begin{definition}
The periodic orbit $\O$ is \textbf{$\bm{\delta}$-robust} for a given $\delta > 0$ if for the discrete-time dynamical system in \eqref{eqn:discretetimeP} with $d^- = -\delta$ and $d^+ = \delta$, that is: 
\begin{align}
 \P : & B_{\rho}(x^*) \times [-\delta,\delta] \to S_{[-\delta,\delta]} \nonumber\\
\label{eqn:Pdelta}
& x_{k+1} = \P(x_k,d_k), \qquad   d_k \in [-\delta,\delta],
\end{align}
there exists a forward invariant set $W \subset B_{\rho}(x^*)$ and for all $x_0 \in W$: 
\begin{eqnarray}
\label{eqn:deltarobustness}
\| x_k - x^* \| \leq M\alpha^k \| x_0 - x^* \|  + \gamma \delta, \qquad \forall k \in \N_{\geq 0}, 
\end{eqnarray}
for some $\gamma > 0$, $M > 0$, and $\alpha \in (0,1)$.
The periodic orbit is \textbf{robust} if it is $\delta$-robust for some $\delta > 0$, and the largest scalar $\overline{\delta}$ such that $\O$ is $\overline{\delta}$-robust is the \textbf{robustness} of $\O$. 
\end{definition}

This seemingly simple definition encodes a surprising amount of information.  First, the forward invariance of $W \subset B_{\rho}(x^*)$ implies that $\P : B_{\rho}(x^*) \times [-\delta,\delta] \to S_{[-\delta,\delta]}$ is a function (rather than a partial function) when restricted to the set $W$. Additionally, the actual $\delta$-robustness condition \eqref{eqn:deltarobustness} is an ISS condition, albeit slightly stronger to remove the dependence on the class $\K$ function and replace this with the constant $\gamma$. Even so, the connections with ISS are important since the associated machinery can be leveraged.

To provide an example of how ISS can inform our thinking on $\delta$-robustness, consider the case when $\O$ is exponentially stable, i.e., $x_{k+1} = \P(x_k,0)$ has an exponentially stable fixed point: $x^* = P_0(x^*)$, i.e., the 0-input system is exponentially stable.  There are no guarantees that $\O$ is thus $\delta$-robust (see \cite{jiang2001input} where a counter example shows that \new{given arbitrarily bounded disturbances, then local asymptotic stability is not enough to guarantee ISS}).  That is, stability does not imply robustness.  

\begin{example}
Returning to the example of the seven-link walker, we can heuristically calculate the $\delta$-robustness associated with the two gaits. 
Specifically, Fig. \ref{fig: exampleissbound} illustrates the ISS-perspective of $\delta$-robustness for the orbits first illustrated in Fig. \ref{fig: motivation}. As shown, the orbit that was robust in Fig. \ref{fig: motivation} satisfies the condition that $W \subset B_{\rho}(x^*)$ is forward invariant ($\delta = 1.5$cm in this example), and $\|x_k - x^*\|$ remains bounded for $\gamma = 36.8$. Comparatively, the orbit that was not robust in Fig. \ref{fig: motivation} experienced a pre-impact state that was outside of $B_{\rho}(x^*)$ and therefore $W$ was not forward invariant.
\end{example}

\begin{figure}[tb]
    \centering
    \includegraphics[width=\linewidth]{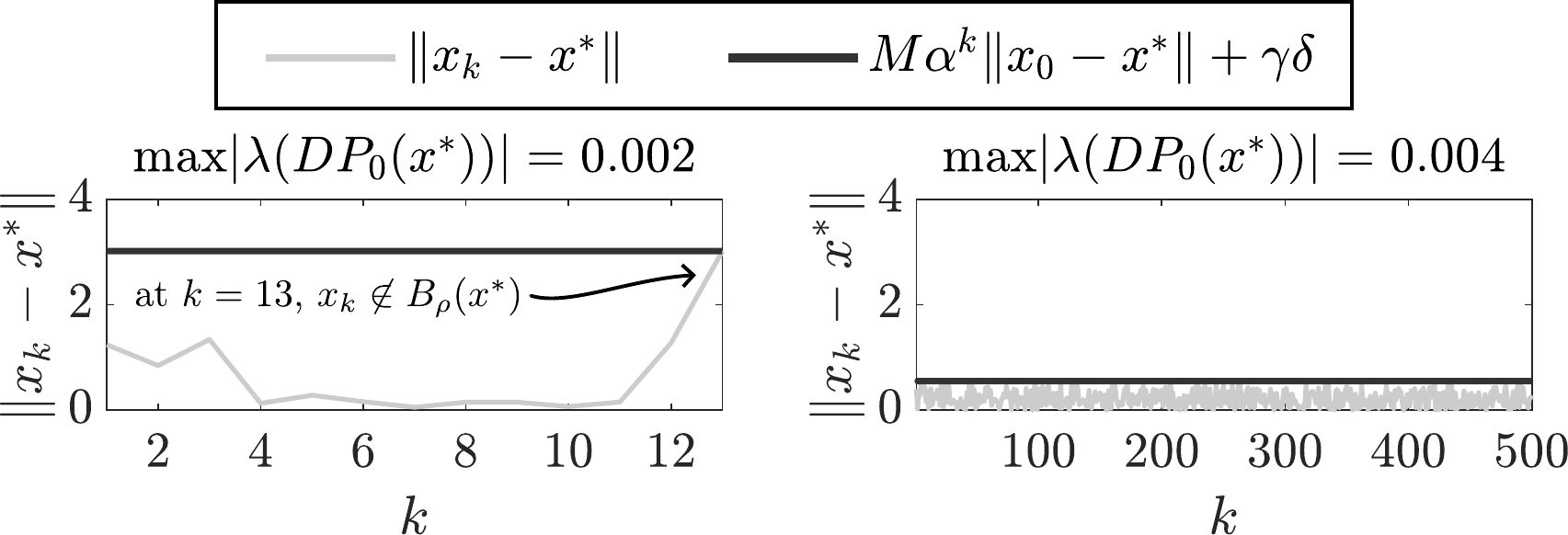}
    \vspace{-0.6cm}
    \caption{
    On the left, the non-robust periodic orbit (as illustrated on the left of Fig. \ref{fig: motivation}) does not satisfy the conditions for $\delta$-robustness for $\delta = 0.015$ (specifically, there does not exist a forward invariant set $W$). In comparison, the robust orbit (as illustrated on the right of Fig. \ref{fig: motivation}) satisfies the definition of $\delta$-robustness with $\gamma = 36.8$ and $\delta = 0.015$m.}
    \label{fig: exampleissbound}
    \vspace{-0.4cm}
\end{figure}

\section{Lyapunov Conditions for $\delta$-robustness}
\label{sec:main}

In this section we present the main theoretic result: Lyapunov conditions for the $\delta$-robustness of periodic orbits.  These conditions, and constructions, follow naturally from the ISS perspective employed in defining $\delta$-robustness.  But care is needed given the complexity of the Poincar\'{e} map.  Importantly, these conditions will lead to an approach for the verification of $\delta$-robustness, as presented in the next section.

\begin{definition}
Consider the discrete-time  dynamical system in \eqref{eqn:Pdelta}.  A function $V : B_{\rho}(x^*) \to \R_{\geq 0}$, for $B_{\rho}(x^*)$ as in \eqref{eqn:extendedtimetoimpact}, is a \textbf{robust Lyapunov function} if:  
\begin{align}
\label{eqn:lyap1}
       k_1 & \| x - x^* \|^c   \leq  V(x)   \leq k_2 \|   x - x^* \|^c  \\
\label{eqn:lyapimplication}
   \| x  - x^* \|  \geq  \chi d & \quad \implies \quad \\
  \Delta V(x,d) &  :=  V(\P(x,d)) - V(x)  \leq - k_3 \| x - x^* \|^c \nonumber
\end{align}
for $\chi, k_1,k_2,k_3, c > 0$ and all $x \in B_{\rho}(x^*)$.  
\end{definition}

\begin{remark}
Note that \eqref{eqn:lyapimplication} can be equivalently restated as: 
\begin{eqnarray}
   V(\P(x,d)) - V(x)    \leq - k_4 \|  x - x^*  \|^c  + \frac{1}{2} \sigma |d|^c,
  \label{eqn:lyap2}
   \end{eqnarray}
 where $\sigma > 0$.  In particular, the corresponding quantities are related via: $k_3 = \frac{1}{2} k_4$ and $\chi = k_4^{-\frac{1}{c}} \sigma^{\frac{1}{c}}$. 
\end{remark}

\newsec{Main result}  We can now state the main result of the paper.  To do so, recall that a 
\emph{Lyapunov sublevel set} is given by: 
\begin{eqnarray}
\Omega_{r} = \{ x \in \R^n ~ | ~
V(x) \leq r\}. 
\end{eqnarray}
This will be essential in establishing:

\begin{theorem}
\label{thm:main}
Consider the discrete-time dynamical system $x_{k+1} = \P(x_k,d_k)$ in \eqref{eqn:Pdelta} with associated periodic orbit $\O$.  If there exists a robust Lyapunov function, $V : B_{\rho}(x^*) \to \R_{\geq 0}$, and:
\begin{eqnarray}
\label{eqn:deltabound}
\delta < \delta_{\max} := \left(\frac{k_1}{\chi^c k_2}\right)^{\frac{1}{c}} \rho,
\end{eqnarray}
then the periodic orbit $\O$ is $\delta$-robust with: 
\begin{align}
 & \qquad   W =   \Omega_{r(\delta)},  \quad \mathrm{for} \quad r(\delta) := k_2 (\chi \delta) ^c  \\
\gamma = &\left( \frac{k_2}{k_1} \right)^{\frac{1}{c}} \chi,   \quad 
M= \left(\frac{k_2}{k_1}\right)^{\frac{1}{c}}, \quad 
\alpha = \left( 1 - \frac{k_3}{k_2}\right)^{\frac{1}{c}}. \nonumber
\end{align}
\end{theorem}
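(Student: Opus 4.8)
The plan is to read \eqref{eqn:deltarobustness} as a discrete-time exponential-ISS estimate and to establish it by a comparison argument on the Lyapunov function $V$ evaluated along the iterates $x_{k+1}=\P(x_k,d_k)$. The whole proof reduces to a single scalar recursion on $V(x_k)$, which I would extract from the two conditions defining a robust Lyapunov function. First I would record the elementary consequences of the sandwich bound \eqref{eqn:lyap1}, namely $\|x-x^*\|^c \le V(x)/k_1$ and $V(x)\le k_2\|x-x^*\|^c$, and I would keep the decrease condition in both its implication form \eqref{eqn:lyapimplication} and its dissipation form \eqref{eqn:lyap2}; the identity $\sigma = k_4\chi^c = 2k_3\chi^c$ coming from the Remark is what ties the two together and makes the threshold $r(\delta)=k_2(\chi\delta)^c$ appear naturally.

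Before anything dynamical, I would verify that the bound $\delta<\delta_{\max}$ is exactly what guarantees $W=\Omega_{r(\delta)}\subset B_{\rho}(x^*)$: from \eqref{eqn:lyap1}, $V(x)\le r(\delta)$ forces $\|x-x^*\|\le (k_2/k_1)^{1/c}\chi\delta=\gamma\delta$, and $\gamma\delta<\gamma\delta_{\max}=\rho$ precisely when $\delta<\delta_{\max}$. This is what upgrades $\P$ from a partial function to a genuine function on $W$, as required by the definition of $\delta$-robustness. Next I would prove forward invariance of $W$ by splitting on the size of $\|x-x^*\|$. On the ``outer'' region $\|x-x^*\|\ge\chi\delta\ge\chi|d|$ the implication \eqref{eqn:lyapimplication} applies and, combined with $\|x-x^*\|^c\ge V(x)/k_2$, yields the contraction $V(\P(x,d))\le (1-k_3/k_2)V(x)=\alpha^c V(x)$. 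On the ``inner'' region $\|x-x^*\|<\chi\delta$ the decrease condition is unavailable, so I would instead use \eqref{eqn:lyap2} together with $V(x)\le k_2\|x-x^*\|^c$ to obtain $V(\P(x,d))\le (k_2-2k_3)\|x-x^*\|^c + k_3(\chi\delta)^c$, which is affine in $\|x-x^*\|^c$ on $[0,\chi\delta)$ and is therefore maximized at one of the two endpoints.

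The inner region is the main obstacle: one must show $V(\P(x,d))<r(\delta)$ there for the whole admissible range $k_3\in(0,k_2)$ (equivalently $\alpha\in(0,1)$), not merely for the easy subcase $k_3\le k_2/2$. I would handle this by the sign of $k_2-2k_3$: if $k_2-2k_3\ge 0$ the endpoint $\|x-x^*\|\to\chi\delta$ gives the value $(k_2-k_3)(\chi\delta)^c$, while if $k_2-2k_3<0$ the endpoint $\|x-x^*\|=0$ gives $k_3(\chi\delta)^c$; in both cases the value is strictly below $k_2(\chi\delta)^c=r(\delta)$ because $k_3<k_2$. This establishes that $\P$ maps $W\times[-\delta,\delta]$ into $W$.

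Finally, combining the two regions gives the uniform recursion $V(x_{k+1})\le\max\{\alpha^cV(x_k),\,r(\delta)\}$, which unrolls by induction (using $\alpha^c\le 1$) to $V(x_k)\le\max\{\alpha^{ck}V(x_0),\,r(\delta)\}\le\alpha^{ck}V(x_0)+r(\delta)$. I would then convert back to the set-distance via $\|x_k-x^*\|^c\le V(x_k)/k_1$, $V(x_0)\le k_2\|x_0-x^*\|^c$, and $r(\delta)=k_2(\chi\delta)^c$, and take $c$-th roots; subadditivity of $t\mapsto t^{1/c}$ (valid for $c\ge 1$, in particular for the quadratic case $c=2$) splits the sum and delivers exactly $\|x_k-x^*\|\le M\alpha^k\|x_0-x^*\|+\gamma\delta$ with $M=(k_2/k_1)^{1/c}$, $\gamma=(k_2/k_1)^{1/c}\chi$, and $\alpha=(1-k_3/k_2)^{1/c}$. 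The only step requiring care beyond the inner-region estimate is this final root bound, where $c\ge 1$ is what keeps the constants clean.
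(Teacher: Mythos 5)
Your proof is correct, and in two places it is actually tighter than the paper's own argument, which proceeds via a preliminary lemma establishing that $B_{\chi\delta}(x^*)\subset\Omega_{r(\delta)}\subset B_{\rho}(x^*)$ and then asserts forward invariance of $\Omega_{r(\delta)}$ from the observation that the decrease condition \eqref{eqn:lyapimplication} is active on the boundary $\partial\Omega_{r(\delta)}$. That boundary argument is a continuous-time habit: for a discrete map a point in the interior with $\|x-x^*\|<\chi\delta$ could in principle jump over the boundary in one step, and the paper never rules this out. Your inner-region estimate $V(\P(x,d))\le (k_2-2k_3)\|x-x^*\|^c+k_3(\chi\delta)^c$, obtained from the dissipation form \eqref{eqn:lyap2} with $\sigma=2k_3\chi^c$, together with the endpoint analysis split on the sign of $k_2-2k_3$, closes exactly that gap. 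Likewise, the paper's convergence argument splits on whether the \emph{initial} condition lies inside or outside $B_{\chi\delta}(x^*)$ and applies the contraction for all $k$ in the outer case, which strictly speaking fails once the trajectory enters the inner ball; your uniform one-step recursion $V(x_{k+1})\le\max\{\alpha^cV(x_k),r(\delta)\}$, unrolled by induction, handles the crossover cleanly and recovers the same constants $M$, $\alpha$, $\gamma$. The only blemish is your final step: by bounding $\max\{\alpha^{ck}V(x_0),r(\delta)\}$ by the sum \emph{before} taking $c$-th roots you need subadditivity of $t\mapsto t^{1/c}$ and hence $c\ge 1$, whereas the theorem is stated for all $c>0$; taking roots of each branch of the max separately and only then using $\max\{a,b\}\le a+b$ (as the paper does) removes that restriction at no cost. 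You also silently assume $\P(x,d)$ is defined for all $x\in B_{\rho}(x^*)$ and $|d|\le\delta$, which the paper justifies with a separate lemma using $\dot h<0$ on $S_{[-\delta,\delta]}$; it is worth at least a sentence, since $\P$ is a priori only a partial function.
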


This theorem is, overall, a variation on Lemma 3.5 in \cite{jiang2001input}.  The proof here follows a similar overall arc, although there are key differences made necessary by the fact that $\P$ is only a partial function. 
This motivates the first Lemma. 

\begin{lemma}
The function $\P: B_{\rho}(x^*) \times [-\delta,\delta] \to S_{[-\delta,\delta]}$ given in \eqref{eqn:Pdelta} is well-defined for all $x \in  B_{\rho}(x^*) $, i.e., for all $x \in B_{\rho}(x^*) $, $\P(x,d)$ exists and satisfies $\P(x,d) \in S_{[-\delta,\delta]}$.
\end{lemma}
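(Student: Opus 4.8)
The plan is to obtain well-definedness directly from the implicit function theorem (IFT), of which the partial function $T_e$ is already an artifact, and then to upgrade the partial function to a total one by controlling $\rho$ and $\delta$ simultaneously. First I would encode the defining relation of $\P$ as the zero set of a single scalar map. Since $\P(x,d) = \varphi_{T_e(x,d)}(\Delta(x))$ as in \eqref{eqn:Pd}, with $T_e$ characterized by $h(\varphi_{T_e(x,d)}(\Delta(x))) = d$, I introduce
\[
F(x,d,t) := h(\varphi_t(\Delta(x))) - d,
\]
which is $C^1$ under the standing regularity already used to construct \eqref{eqn:extendedtimetoimpact}. At the nominal data $(x^*,0,T)$, periodicity gives $F(x^*,0,T) = h(\varphi_T(\Delta(x^*))) = h(x^*) = 0$, while the transversality of the guard \eqref{eq: heightguard} gives $\partial_t F(x^*,0,T) = \dot h(\varphi_T(\Delta(x^*))) = \dot h(x^*) < 0 \neq 0$.

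Second, I would apply the IFT \cite{lee2010introduction} to $F$ at $(x^*,0,T)$, solving for $t$ in terms of $(x,d)$. This produces an open neighborhood $\mathcal{U}$ of $(x^*,0)$ and a unique $C^1$ map $T_e:\mathcal{U}\to\R$ with $F(x,d,T_e(x,d)) = 0$, i.e. $h(\P(x,d)) = d$, throughout $\mathcal{U}$. Because $\mathcal{U}$ is open and contains $(x^*,0)$, it contains a product neighborhood $B_{\rho}(x^*)\times[-\delta,\delta]$ for $\rho$ and $\delta$ sufficiently small; I would adopt this $\rho$ as the radius used throughout, shrinking the one fixed by the $d=0$ construction if necessary (which does not disturb the $d=0$ statement), and choose $\delta$ accordingly. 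On this product, $T_e(x,d)$ exists for every $x\in B_{\rho}(x^*)$ and every $d\in[-\delta,\delta]$, so $\P(x,d) = \varphi_{T_e(x,d)}(\Delta(x))$ is defined, which is exactly the asserted totality. Membership in the guard then follows: $h(\P(x,d)) = d\in[-\delta,\delta]$ by construction, and since $(x,d)\mapsto \dot h(\varphi_{T_e(x,d)}(\Delta(x)))$ is continuous and strictly negative at $(x^*,0)$, it stays negative after shrinking $\rho,\delta$ once more, so $\P(x,d)\in S_d \subseteq S_{[-\delta,\delta]}$.

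The main obstacle is uniformity: the IFT is intrinsically local, whereas the lemma demands a single $\delta$ valid for the \emph{entire} ball $B_{\rho}(x^*)$, not merely near $x^*$. The clean resolution is the openness argument above, which extracts one product neighborhood from the single application at $(x^*,0)$. If one instead wishes to retain the originally fixed $\rho$ without shrinking, the uniform $\delta$ can be produced by a compactness argument on $\overline{B_{\rho}(x^*)}$, where $\partial_t F$ is bounded away from zero, yielding a quantitative (uniform) radius in $d$.

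A second, more delicate point I would be careful about is that the IFT branch $T_e(x,d)$ must coincide with the first-hitting-time infimum that actually defines $\P$, so that the IFT time is the genuine $\inf$ rather than a later crossing. Here I would use that along the nominal orbit $h(\varphi_t(\Delta(x^*))) > 0$ for $t\in(0,T)$ with a transversal terminal crossing at $t=T$; by continuity of the flow, for $\rho,\delta$ small the trajectory $\varphi_t(\Delta(x))$ remains on the side $h > d$ for all $t\in[0,T_e(x,d))$, which rules out any earlier intersection with $S_d$ and identifies $T_e(x,d)$ as the infimum. This is the step most easily overlooked, and it is where the smallness of $\rho$ and $\delta$ is genuinely used beyond mere solvability of $F=0$.
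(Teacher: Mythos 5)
Your argument is correct, but it takes a genuinely different route from the paper. The paper does not re-invoke the implicit function theorem at all in this proof: it starts from the fact that $T_e(\cdot,0)$ is \emph{already} defined on the whole ball $B_{\rho}(x^*)$ (that is what $\rho$ was chosen for in \eqref{eqn:extendedtimetoimpact}), writes $h(\varphi_t(\Delta(x)))$ as the integral of $\dot h$ from the known zero-crossing time $T_e(x,0)$, bounds $\dot h$ between two negative constants $\underline{h} < \overline{h} < 0$ on the band $-\delta \leq h \leq \delta$, and concludes by the intermediate value theorem that every level $d \in [-\delta,\delta]$ is attained at some time near $T_e(x,0)$. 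So the paper reduces the problem to a one-dimensional continuation in $d$ for each fixed $x$, getting uniformity over $B_{\rho}(x^*)$ for free from the hypothesis, whereas you rebuild everything from a single joint IFT application at $(x^*,0)$ and then must recover uniformity either by shrinking $\rho$ or by your compactness variant. The shrinking option proves a slightly weaker statement than the lemma intends: the radius $\rho$ is pinned down by \eqref{eqn:extendedtimetoimpact} and reappears in the bound $\delta_{\max}$ of Theorem~\ref{thm:main}, so replacing it with a smaller radius would ripple into the main result; your compactness alternative on $\overline{B_{\rho}(x^*)}$ is the version that actually matches the lemma, and is morally the same as the paper's uniform bound on $\dot h$. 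On the other hand, your final paragraph identifying the IFT branch with the first-hitting-time infimum addresses a point the paper silently elides when it asserts ``this $t = T_e(x,d)$,'' and is a genuine strengthening of the written argument.
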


\begin{proof}
By the construction of the extended Poincar\'e map, $P_0$ is well-defined on $B_{\rho}(x^*)$, i.e., for all $x \in B_{\rho}(x^*)$ it follows that $\P(x,0) \in S_0$, i.e., $h(\varphi_{T_e(x,0)}(\Delta(x))) = 0$.  Therefore:
$$
h(\varphi_{t}(\Delta(x))) = \int_{T_e(x,0)}^t 
\dot{h}(\varphi_{\tau}(\Delta(x))) d \tau.
$$
But $\dot{h}(x) < 0$ for all $x \in  S_{[-\delta,\delta]}$ by definition.  Therefore, on the closed set defined by $- \delta \leq h(x) \leq \delta$, $\dot{h}$ takes a minimum and maximum value: $\underline{h} < \overline{h} < 0$.  This implies that: 
$$
\underline{h} (t - T_e(x,0))  \leq h(\varphi_{t}(\Delta(x))) \leq \overline{h}  (t - T_e(x,0)).
$$
Thus, there exists a $t$ (possibly negative) such that $h(\varphi_{t}(\Delta(x))) = d$. This $t = T_e(x,d)$. 
\end{proof}

Since $\P$ is well-defined, we can now find a set such that $x_{k+1} = \P(x_k,d_k)$ is defined for all $k$, i.e., a forward invariant set contained in $B_{\rho}(x^*)$, using Lyapunov sublevel sets.

\begin{lemma}
\label{lem:levelset}
If $\delta < \delta_{\max}$, with $\delta_{\max}$ in \eqref{eqn:deltabound}, 
then for $r(\delta) := k_2 (\chi \delta) ^c$ it follows that:
$$
B_{\chi \delta}(x^*) \subset \Omega_{r(\delta)} \subset B_{\rho}(x^*).
$$
Moreover, the set $\Omega_{r(\delta)}$ is forward invariant. 
\end{lemma}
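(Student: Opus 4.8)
The plan is to prove the three assertions separately: the inner inclusion $B_{\chi\delta}(x^*) \subset \Omega_{r(\delta)}$, the outer inclusion $\Omega_{r(\delta)} \subset B_{\rho}(x^*)$, and finally forward invariance of $\Omega_{r(\delta)}$. The two inclusions follow directly from the comparison bounds \eqref{eqn:lyap1} and do not require the dynamics, so I would dispatch them first. For the inner inclusion, take any $x$ with $\|x-x^*\| \le \chi\delta$ and apply the upper bound to obtain $V(x) \le k_2\|x-x^*\|^c \le k_2(\chi\delta)^c = r(\delta)$, so $x \in \Omega_{r(\delta)}$; note that $r(\delta)$ is defined precisely as the largest value $V$ can attain on $\overline{B_{\chi\delta}(x^*)}$ under the upper bound.

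For the outer inclusion, take $x$ with $V(x) \le r(\delta)$ and use the lower bound $k_1\|x-x^*\|^c \le V(x) \le r(\delta) = k_2(\chi\delta)^c$, which rearranges to $\|x-x^*\| \le (k_2/k_1)^{1/c}\chi\delta$. I would then substitute the hypothesis $\delta < \delta_{\max} = (k_1/(\chi^c k_2))^{1/c}\rho$ from \eqref{eqn:deltabound}: a short computation shows the product $(k_2/k_1)^{1/c}\chi(k_1/(\chi^c k_2))^{1/c}$ collapses to $1$, so that $\|x-x^*\| < \rho$ and $x \in B_{\rho}(x^*)$. This is the step that justifies the particular form chosen for $\delta_{\max}$.

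The substantive part is forward invariance: for every $x \in \Omega_{r(\delta)}$ and every $d \in [-\delta,\delta]$ I must show $V(\P(x,d)) \le r(\delta)$, where existence of $\P(x,d)$ is already guaranteed because $\Omega_{r(\delta)} \subset B_{\rho}(x^*)$ by the preceding lemma. I would split on the size of $\|x-x^*\|$ relative to $\chi|d|$. In the easy case $\|x-x^*\| \ge \chi|d|$, the decrease implication \eqref{eqn:lyapimplication} applies, giving $\Delta V(x,d) \le -k_3\|x-x^*\|^c \le 0$ and hence $V(\P(x,d)) \le V(x) \le r(\delta)$ since $x \in \Omega_{r(\delta)}$. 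The remaining case $\|x-x^*\| < \chi|d|$ is where the implication is silent, and here I would invoke the equivalent dissipation form \eqref{eqn:lyap2}, namely $V(\P(x,d)) \le V(x) - k_4\|x-x^*\|^c + \tfrac{1}{2}\sigma|d|^c$, combine it with $V(x) \le k_2\|x-x^*\|^c$, and use the relations $\sigma = k_4\chi^c$ and $k_3 = \tfrac{1}{2}k_4$.

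I expect this last case to be the main obstacle, since the coefficient $k_2 - k_4$ in the resulting bound has indeterminate sign and the crude estimate $|d| \le \delta$ alone is too lossy. The resolution is to view $V(\P(x,d)) \le (k_2-k_4)\|x-x^*\|^c + k_3\chi^c|d|^c$ as an affine function of $\|x-x^*\|^c$ on the interval $[0,\chi^c|d|^c]$; being monotone, it attains its supremum at an endpoint. The endpoint $\|x-x^*\|^c = 0$ yields the bound $k_3\chi^c\delta^c$, while the endpoint $\chi^c|d|^c$ yields $(k_2-k_3)\chi^c\delta^c$ (after simplifying $k_2 - k_4 + k_3 = k_2 - k_3$). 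Using $|d| \le \delta$ together with the consistency facts $0 < k_3 \le k_2$ (the latter following from $V \ge 0$ and the sandwich bound \eqref{eqn:lyap1} applied where the decrease holds), both endpoint values are at most $k_2\chi^c\delta^c = r(\delta)$. This establishes $V(\P(x,d)) \le r(\delta)$ in all cases and completes the proof.
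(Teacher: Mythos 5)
Your proposal is correct, and the two set inclusions are handled exactly as in the paper: the inner one via the upper bound in \eqref{eqn:lyap1}, the outer one via the lower bound together with the observation that $\delta < \delta_{\max}$ is precisely the condition $r(\delta) < k_1\rho^c$ (the paper phrases the cancellation at the level of $r(\delta)$ rather than of $\|x-x^*\|$, but it is the same computation). Where you genuinely diverge is forward invariance. The paper's argument is a single sentence: on $\partial\Omega_{r(\delta)}$ the implication \eqref{eqn:lyapimplication} is active, hence $\Delta V < 0$ there, and invariance ``follows.'' That boundary-only reasoning is borrowed from the continuous-time setting and does not by itself close the discrete-time case: a point in the interior of $\Omega_{r(\delta)}$ with $\|x-x^*\| < \chi|d|$ is not covered by \eqref{eqn:lyapimplication}, and $V$ may increase there, so one must still check that it cannot increase past $r(\delta)$. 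Your case split does exactly this, passing to the dissipation form \eqref{eqn:lyap2}, bounding $V(\P(x,d))$ by an affine function of $\|x-x^*\|^c$ on $[0,\chi^c|d|^c]$, and checking both endpoints using $0 < k_3 \le k_2$ (which you correctly justify from nonnegativity of $V$, the same argument the paper uses later to get $\alpha < 1$). The price is that you must take \eqref{eqn:lyap2} as holding for all $(x,d)$, which the paper's remark asserts as an equivalent restatement, so you are entitled to it. In short: your proof buys a complete discrete-time invariance argument at the cost of a longer case analysis; the paper's buys brevity at the cost of leaving the interior case implicit.
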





\begin{proof}
For $x \in B_{\chi \delta}(x^*)$: 
$$
\| x - x^* \| < \chi \delta ~  \Rightarrow  ~ 
V(x) \leq k_2 \| x - x^* \|^c < k_2 (\chi \delta)^c = r(\delta)
$$
and therefore $B_{\chi \delta}(x^*) \subset \Omega_{r(\delta)}$.  Now if $r(\delta) < k_1 \rho^c $ (which is equivalent to the condition \eqref{eqn:deltabound}) it follows that:
$$
V(x) \leq r(\delta) \quad \Rightarrow \quad 
k_1 \| x - x^* \|^c \leq V(x)  \leq r(\delta) <  k_1 \rho^c 
$$
And therefore: $\Omega_{r(\delta)} \subset B_{\rho}(x^*)$.   
%
%
Finally, since for $\delta < \delta_{\max}$ we have $B_{\chi \delta}(x^*) \subset \Omega_{r(\delta)}$, it follows that on the boundary of $\Omega_{r(\delta)}$, namely $\partial \Omega_{r(\delta)}$, condition \eqref{eqn:lyapimplication} is active and therefore: $\Delta V(x,d) < 0$.  The forward invariance of $\Omega_{r(\delta)}$ follows. 
\end{proof}

Lemma \ref{lem:levelset} gives an upper bound on the $\delta$-robustness of a given periodic orbit $\O$, namely $\delta_{\max}$, based upon the domain of definition of $\P$.  It also establishes the forward invariance of  $\Omega_{r(\delta)}$.  
Leveraging this, we can prove the main result. 

\begin{proof}[Proof of Theorem \ref{thm:main}]
Let $x_0 \in \Omega_{r(\delta)}$, wherein the forward invariance of $\Omega_{r(\delta)}$ (Lemma \ref{lem:levelset}) implies $x_k \in \Omega_{r(\delta)} \subset B_{\rho}(x^*)$ for all $k \in \N_{\geq 0}$.  Thus both $\P$ and $V$ are well-defined.
We consider two cases: $x_0 \notin B_{\chi \delta}(x^*)$ and $x_0 \in B_{\chi \delta}(x^*)$.

\vspace{0.1cm}
\underline{$\| x_0 - x^* \| \geq \chi \delta$:}  In this case the implication \eqref{eqn:lyapimplication} is active: 
$$
\Delta V \leq - \frac{k_3}{k_2} V \quad \implies \quad
V(x_k) \leq \left( 1 - \frac{k_3}{k_2}\right)^k V(x_0)
$$
where the implication follows from applying the inequality on the right recursively (see also the comparison lemma \cite{jiang2002converse}).  Therefore, using the inequalities in \eqref{eqn:lyap1} we have: 
\begin{eqnarray}
\label{eqn:Malpha}
\| x_k - x^* \| \leq  \underbrace{\left( \frac{k_2}{k_1} \right)^{\frac{1}{c}}}_{M} \underbrace{\left( 1 - \frac{k_3}{k_2}\right)^{\frac{k}{c}}}_{\alpha^k} \| x_0 - x^* \|.
\end{eqnarray}
Finally, note that $k_3/k_2 < 1$ as otherwise $V(x_k)$ would be negative for $k = 1$ which is impossible.  Therefore, $\alpha < 1$. 

\vspace{0.1cm}
\underline{$\| x_0 - x^* \| < \chi \delta$:}  While the implication in \eqref{eqn:lyapimplication} no longer holds, we still have $x_k \in \Omega_{r(\delta)}$.  As a result: 
\begin{align}
k_1 \| x_k - x^* \|^c \leq  V(k_k) \leq  r(\delta) =  & k_2 (\chi \delta) ^c 
  \nonumber\\
  \label{eqn:gammaeqn}
 \quad \implies \quad \| x_k - x^* \| \leq &  \underbrace{\left( \frac{k_2}{k_1} \right)^{\frac{1}{c}} \chi}_{\gamma} \delta 
\end{align}
Therefore, for $M$, $\alpha$ in \eqref{eqn:Malpha} and $\gamma$ in \eqref{eqn:gammaeqn} we have:
\begin{eqnarray}
\| x_k - x^* \| & \leq  & \max \{ M \alpha^k \| x_0 - x^* \|, \gamma \delta \} \nonumber\\
& \leq &  M \alpha^k \| x_0 - x^* \| + \gamma \delta \nonumber
\end{eqnarray}
as desired, i.e., $\delta$-robustness is established with $W =  \Omega_{r(\delta)}$ the required forward invariant set.  
\end{proof}

\section{Algorithmic Verification of $\delta$-Robustness 
}

\begin{figure}[tb]
    \centering
    \includegraphics[width=\linewidth]{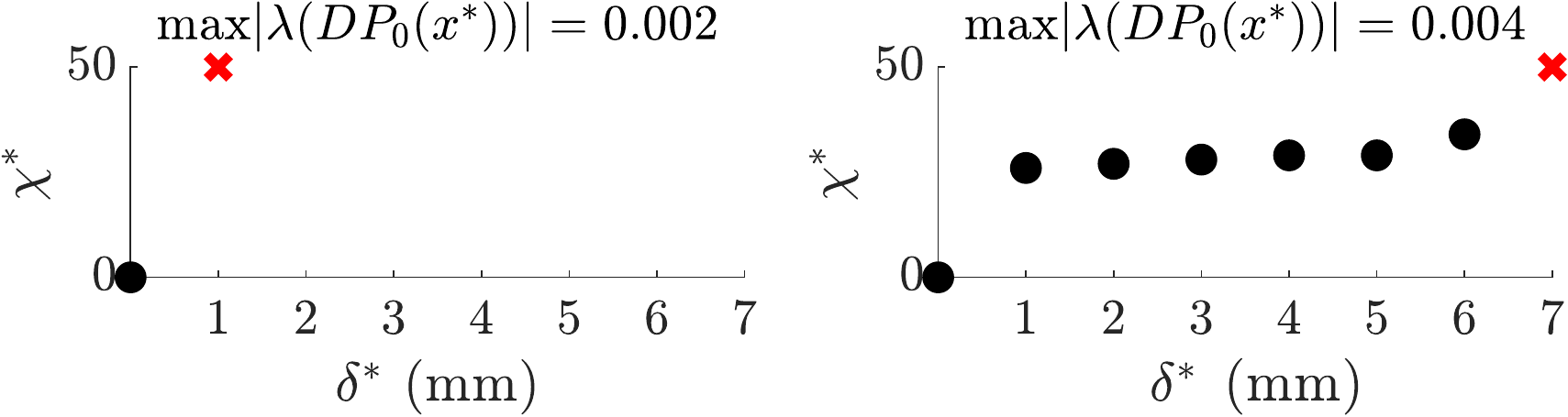}
    \vspace{-0.6cm}
    \caption{Results of the algorithmic approach to Opt. \eqref{eq: optdelta} for the gaits shown in Fig. \ref{fig: motivation} with the maximum allowable $\chi$ set to $50$. As shown, the gaits were determined to be $\delta$-robust for $\delta^* = 0$ and $\delta^*=6$mm, respectively.}
    \label{fig: algresults}
    \vspace{-0.5cm}
\end{figure}

\new{Finally, to verify the $\delta$-robustness of a given periodic orbit $\O$, we will synthesize an optimization framework that leverages the previously presented robust Lyapunov conditions.}

\newsec{Problem Setup}  
Assume the existence of a stable periodic orbit $\O$ and so $x_{k+1} = \P (x_k, 0)$ has an exponentially stable fixed point $x^*$.  For simplicity we will take $x^* = 0$ (achieved via the simple coordinate transformation $x \mapsto x - x^*$).  As a result, the linearization: 
$$
x_{k + 1} = A x_k  := D\P(0,0) x_k
$$
is exponentially stable. The Lyapunov matrix $P = P^T > 0$ is obtained by solving the discrete-time \new{Lyapunov} equation: 
$$
A^T P A - P = - Q
$$
for $Q = Q^T > 0$.   The end result is that the discrete-time Lyapunov function $V(x) = x^T P x$ satisfies:
\begin{align}
\label{eqn:lyap1lin}
    \lambda_{\min}(P) \| x  \|^2 \leq V(x)  & \leq \lambda_{\max}(P)  \|   x \|^2  \\
  V(A x) - V(x) &  \leq - \lambda_{\min}(Q) \|  x  \|^2. 
  \label{eqn:lyap2lin}
\end{align}
and thereby establishes exponential stability of the linear system (and the nonlinear system locally).  Unlike stability, it is not guaranteed that this Lyapunov function can be used to establish robustness.  Yet we will use it as a ``guess'' for a robust Lyapunov function in order to develop an algorithm to establish the robustness of a given gait $\O$. 

\newsec{Optimization Problem}  Recall that the invariant set used to establish $\delta$ robustness was defined in Lemma \ref{lem:levelset}, namely $\Omega_{r(\delta)}$.  In this case: 
$$
\Omega_{r(\delta)} = \{ x \in \R^n | V(x) = x^T P x \leq r(\delta) := k_2 (\chi \delta) ^c \}.
$$
Per the proof of Lemma \ref{lem:levelset} we therefore have:
$$
B_{r_1}(0) \subset \Omega_{r(\delta)}  \subset B_{r_2}(0), 
$$
with:
$$
r_1 := \chi \delta, \qquad  r_2 := \left(\frac{\lambda_{\max}(P)}{\lambda_{\min}(P)}\right)^{\frac{1}{2}} \chi \delta. 
$$
Then with the goal of finding the largest $\delta^* > 0$ such that $\O$ is $\delta$ robust, we formulate the following optimization problem: 
\begin{align}
    \label{eq: optdelta}
    (\delta^*,\chi^*) = \argmax_{\delta, \chi > 0} & ~ \delta \\
    \text{s.t. }  & V(\P(x,d)) - V(x)  \leq - k\|  x   \|^2  \notag \\
                  & \quad \forall ~ r_1  < \| x \| < r_2 , \quad \forall ~  d \in [-\delta,\delta], \notag
\end{align}
where $k \in (0,1)$ is a user-defined variable, and we take $Q = I$ (wherein $\lambda_{\min}(Q) = 1$) to remove decision variables.  

\begin{figure}[tb]
    \centering
    \includegraphics[width=\linewidth]{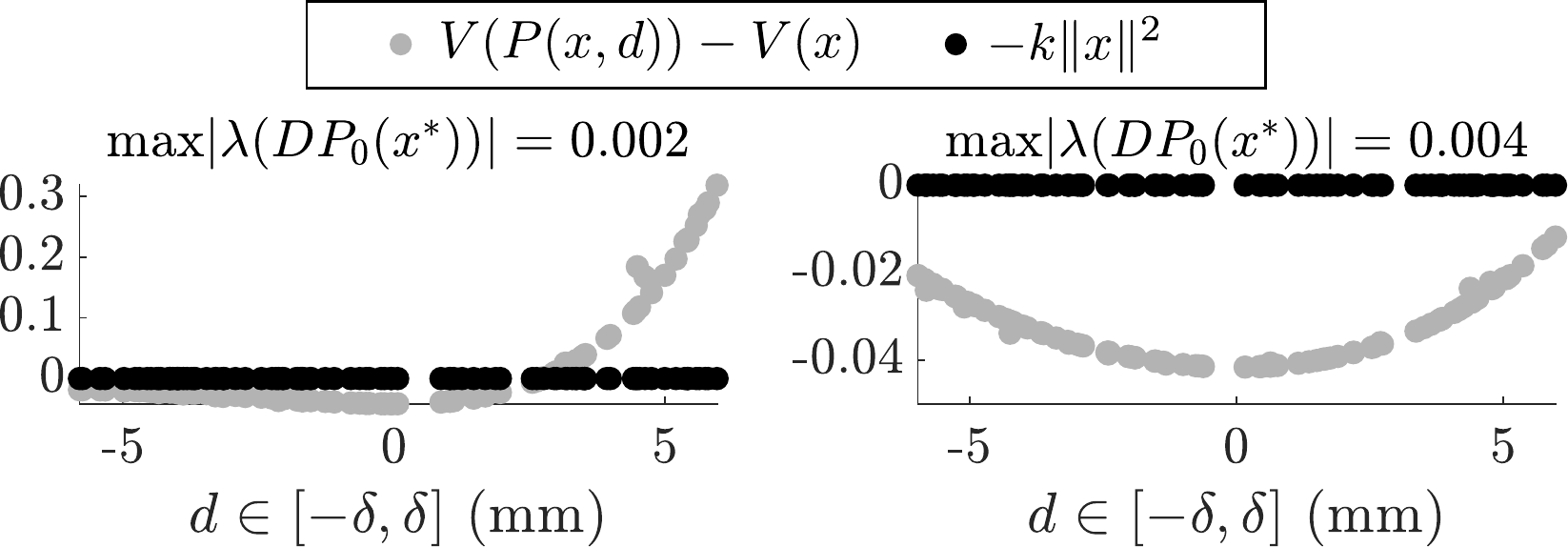}
    \vspace{-0.6cm}
    \caption{Illustration of the Lyapunov condition in \eqref{eq: optdelta} for 100 random samples $x \in B_{r_1}(0)$ with $d \sim U(-6\textrm{mm},6\textrm{mm})$. As shown, the Lyapunov condition is satisfied for the gait identified as being $\delta$-robust for $\delta = 6$mm with $\chi = 34$ (the corresponding ISS bound is illustrated in Fig. \ref{fig: issresults}).}
    \label{fig: lyapcondition}
    \vspace{-0.2cm}
\end{figure}

\begin{figure}[tb]
    \centering
    \includegraphics[width=\linewidth]{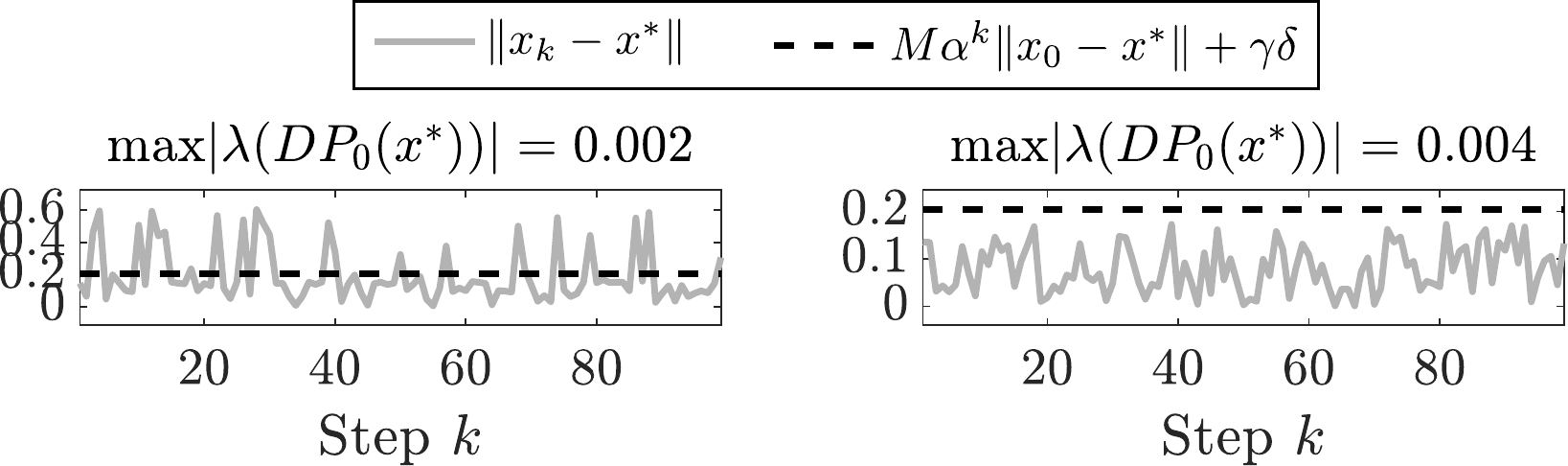}
    \vspace{-0.6cm}
    \caption{Verification of $\delta$-robustness for $\delta^* = 6$mm and $\chi^* = 34$ (selected based on the algorithm results shown in Fig. \ref{fig: algresults}). As shown in the figure, Gait 1 was not $\delta$-robust while Gait 2 was $\delta$-robust with $M$, $\gamma$, and $\alpha$ defined using the relationships derived in Theorem \ref{thm:main} and $V(x) = x^TPx$.}
    \label{fig: issresults}
    \vspace{-0.5cm}
\end{figure}

\new{
Since this optimization problem is bilinear and nonconvex, it is easier to approach algorithmically. Concretely, as outlined in Algorithm\footnote{The implementation of the algorithm, as well as its application towards evaluating the $\delta$-robustness of bipedal walking gaits, is provided in the repository: \url{https://github.com/maegant/deltaRobustness.git}} \ref{alg: optdelta}, this procedure consists of slowly increasing $\chi$ for each candidate $\delta$ and checking the Lyapunov condition in \eqref{eq: optdelta} for random samples $x \in B_{r_1}(0)$.
The advantage of this approach is that it is guaranteed to identify sets $\{\chi,\delta\}$ that certify $\delta$-robustness (assuming one exists and that $\Delta \chi$ is sufficiently small).} 
We demonstrate the algorithm for each of the two gaits illustrated in Fig. \ref{fig: motivation} with the results provided in Fig. \ref{fig: algresults}. As expected, the second gait illustrated in Fig. \ref{fig: motivation} and Fig. \ref{fig: exampleissbound} was verified to be $\delta$-robust, with $\delta^* = 6$mm. \new{Notably, this value is smaller than the 15mm ground heights empirically demonstrated in Fig. \ref{fig: motivation} due to the worst-case guarantees afforded by ISS.
} A visualization of the Lyapunov condition for 100 random samples ($x \in B_{r_1}(0)$) is provided in Fig. \ref{fig: lyapcondition} with the corresponding ISS bound in Fig. \ref{fig: issresults}.

\begin{figure}[tb]
\vspace{-5mm}
\centering
\begin{algorithm}[H]
\scriptsize
\caption{Algorithmic Approach to \eqref{eq: optdelta}}\label{alg: optdelta}
\begin{algorithmic}[1]
\STATE $\delta = 0$, $\chi_0 = 1$, $N$ = num. samples, $\{\Delta \delta, \Delta \chi, \chi_{\max}\} \in \R_{>0}$
\PROCEDURE{TestDelta}{$\delta$, $\chi_{\delta}$}
\FOR {$i = [1,\dots,N]$}
\STATE Sample $x' = x - x^*$ such that $\|x'\| = \chi_{\delta} \delta $
\IF {$V(\P(x',d)) - V(x') \leq -k\|x'\|^2, ~\forall d \in [-\delta,\delta]$}
    \STATE Repeat TestDelta($\delta + \Delta\delta$,$1$) 
\ELSE
    \IF {$\chi_{\delta} + \Delta \chi > \chi_{\max}$}
        \STATE Terminate with $\delta^* = \delta-\Delta\delta$, $\chi^* = \chi_{\delta^*}$
    \ELSE 
        \STATE Repeat TestDelta($\delta$,$\chi_{\delta} + \Delta\chi$)
    \ENDIF
\ENDIF
\ENDFOR
\end{algorithmic}
\end{algorithm}
\vspace{-10mm}
\end{figure}

\section{Conclusion}
In this work, a novel notion of robustness, \emph{$\delta$-robustness}, was formulated from the perspective of input-to-state stability.
Lyapunov conditions were also derived to certify $\delta$-robustness for a nominal periodic orbit. Future work includes directly evaluating $\delta$-robustness in the gait generation process to systematically generate periodic orbits that are robust to uncertain terrain. Additionally, sampling methods can be leveraged to obtain probabilistic guarantees on $\delta$-robustness. Lastly, the discrete-time Lyapunov condition can be translated to a stochastic condition in order to obtain more realistic (albeit probabilistic) estimates of the $\delta$-robustness.

 

\bibliographystyle{IEEEtran}
\bibliography{./Bibliography/IEEEabrv, ./Bibliography/References}
\addtolength{\textheight}{-3cm}   


\end{document}